\documentclass{article}

\usepackage[accepted]{icml2026}

\usepackage[utf8]{inputenc}
\usepackage[T1]{fontenc}
\usepackage{hyperref}
\usepackage{url}
\usepackage{booktabs}
\usepackage{amsfonts}
\usepackage{nicefrac}
\usepackage{microtype}
\usepackage{xcolor}
\usepackage{amsmath,amssymb,amsthm}
\usepackage{graphicx}
\usepackage{subcaption}
\usepackage{enumitem}
\usepackage{siunitx}
\newtheorem{proposition}{Proposition}
\newtheorem{remark}{Remark}
\newcommand{\R}{\mathbb{R}}

\icmltitlerunning{Catastrophic Forgetting is Low-Rank}

\begin{document}

\twocolumn[
\icmltitle{Catastrophic Forgetting is Low-Rank: \\
A Function-Space Theory for Continual Adaptation}
\begin{icmlauthorlist}
\icmlauthor{Ido Nitzan Hidekel}{tau}
\icmlauthor{Dan Raviv}{tau}
\end{icmlauthorlist}

\icmlaffiliation{tau}{Tel Aviv University, Tel Aviv, Israel}

\icmlcorrespondingauthor{Ido Nitzan Hidekel}{idon@mail.tau.ac.il}
\icmlcorrespondingauthor{Dan Raviv}{darav@tauex.tau.ac.il}

\icmlkeywords{continual learning, catastrophic forgetting, neural tangent kernel, function space, parameter-efficient fine-tuning}

\vskip 0.3in
]

\printAffiliationsAndNotice{}
\begin{abstract}
Catastrophic forgetting in continual adaptation is usually studied through parameter drift, replay, or distillation, but these views do not identify which output-space directions are vulnerable. We give a function-space account in the NTK regime: new-task training induces old-task prediction drift through the cross-task kernel, yielding a closed-form predictor for the forgetting vector before any new-task gradient step. In frozen-backbone linear-head PEFT-CL, where the model is linear in the trainable parameters, the predictor is exact up to numerical precision; for nonlinear adapters/full fine-tuning it is a local NTK approximation. The same expression reveals that forgetting concentrates in a small number of old-task NTK eigenmodes, and under frozen linear heads gives a Kronecker scaling rule for the vulnerable rank. These results clarify the relation to prior NTK-overlap theory, explain why parameter-space regularizers can miss output-space interference, and motivate a targeted spectral regularizer.
\end{abstract}
\section{Introduction}

Continual adaptation---fine-tuning, alignment maintenance, and adaptation under domain or distribution shift---is bottlenecked by catastrophic forgetting. The problem has gained renewed urgency as adaptation increasingly targets large pretrained models: recent work documents substantial forgetting during continual instruction tuning of LLMs~\citep{luo2024empirical,wang2024llmsurvey}, and a subfield of parameter-efficient continual learning has emerged to adapt frozen backbones without forgetting~\citep{wang2022l2p,wang2022dualprompt,smith2023coda,liang2024inflora}. Despite a decade of mitigation strategies spanning parameter regularization~\citep{kirkpatrick2017overcoming,zenke2017continual}, replay~\citep{rolnick2019experience,buzzega2020dark}, knowledge distillation~\citep{li2017learning}, gradient projection~\citep{farajtabar2020orthogonal,saha2021gradient}, and prompt- or adapter-based PEFT methods, the field still lacks a \emph{mechanistic} account of forgetting. The closest theoretical prior work~\citep{doan2021theoretical,bennani2020generalisation} introduces the cross-task NTK overlap matrix and bounds forgetting magnitude through task alignment, but emphasizes scalar magnitude/risk control rather than the eigenspace structure of the realized forgetting vector: it does not identify which output directions drift, nor why some directions are vulnerable while others are not.

\paragraph{This paper targets the theoretical foundations of continual adaptation.}
We study forgetting as NTK interference in function space. The goal is not to replace PEFT-CL methods, but to identify the output-space directions along which adaptation induces drift. Our main experiments use the analytically exact linear limit: a frozen pretrained backbone with a task-shared trainable linear head. More general frozen-backbone modules such as adapters or LoRA are covered by the same local NTK linearization, but the predictor is then approximate.

\textbf{Contributions.}
\begin{enumerate}[leftmargin=*,topsep=0pt,itemsep=1pt]
\item \textbf{A closed-form forgetting predictor} (Proposition~\ref{prop:predictor}, with full proof) that identifies the forgetting vector direction and magnitude jointly from quantities computable before any new-task gradient step. The predicted shift achieves cosine similarity indistinguishable from $1$ with the realized shift, at float32 precision, on transformer backbones under frozen-backbone linear-head adaptation, and $0.994$ on a frozen ResNet-18 (Section~\ref{sec:theory}).
\item \textbf{A structural characterization of the vulnerable subspace}: in the eigenbasis of the Task-A NTK $K_{AA}$, the forgetting vector concentrates in only a handful of eigenmodes (1--6 modes carry 50--90\% of its energy on Split-MNIST/CIFAR-10). Under a frozen backbone with a trainable linear head of $C$ outputs, $K_{AA}$ admits an exact Kronecker factorization that fixes the vulnerable rank to $k^\star \approx C \cdot k_G$ where $k_G$ is the effective rank of the feature Gram (Remark~\ref{rem:kronecker}).
\item \textbf{A diagnostic lens for method design}: the theory explains why parameter-space methods fail on shared-head benchmarks and why targeted and broad function-space methods converge under the scaling rule. Spectral regularization is offered as an instrument of the theory; its consistent shared-head gains (Table~\ref{tab:shared}) confirm the diagnostic rather than constituting the central claim (Sections~\ref{sec:method},~\ref{sec:experiments}).
\end{enumerate}

\section{Related Work}
\label{sec:related}

\textbf{NTK analyses of continual learning.}\quad \citet{doan2021theoretical} introduce the NTK \emph{overlap matrix} $K_{AB}$ and derive a two-task expression for prediction drift that is closely related to Eq.~\eqref{eq:predictor}; their analysis emphasizes a scalar magnitude bound governed by task alignment. \citet{bennani2020generalisation} derive generalization bounds for SGD and OGD~\citep{farajtabar2020orthogonal} under NTK linearization. Building on this line, our contribution is to read the same two-task expression as a structured operator on output space: we identify the eigenstructure of the old-task kernel that controls which output directions are vulnerable, derive its Kronecker factorization under a frozen linear head, and use the resulting low-rank picture to motivate a targeted spectral regularizer (all formalized in Section~\ref{sec:theory}).

\textbf{PEFT-based continual learning.}\quad Recent pretrained-backbone CL is driven by parameter-efficient adaptation: prompt pools~\citep{wang2022l2p,wang2022dualprompt}, decomposed attention prompting~\citep{smith2023coda}, and interference-motivated low-rank adapters~\citep{liang2024inflora}. These engineer around forgetting without characterizing the interference they mitigate. Our eigenmode concentration and Kronecker factorization are exact under frozen-backbone linear-head adaptation, and provide a diagnostic lens for broader PEFT-CL methods when treated through local NTK linearization.

\textbf{Functional regularization.}\quad Spectral regularization belongs to the functional-regularization family~\citep{benjamin2019measuring,titsias2019functional,li2017learning}: LwF and FRCL distill drift across all output directions, whereas we concentrate the penalty on the NTK-identified vulnerable subspace. Sections~\ref{sec:drift}--\ref{sec:peft_cl} show targeted and broad approaches converge under the scaling rule of Remark~\ref{rem:kronecker}.

\textbf{Parameter-space methods.}\quad GPM~\citep{saha2021gradient}, OGD~\citep{farajtabar2020orthogonal}, EWC~\citep{kirkpatrick2017overcoming}, and SI~\citep{zenke2017continual} constrain parameter drift. Section~\ref{sec:shared_head} shows all four fail on shared-head benchmarks, consistent with our claim that vulnerable directions live in output space, not parameter space.

\textbf{Concurrent mechanistic analyses.}\quad \citet{imanov2026mechanistic} decompose LLM forgetting over architectural components (a \emph{parameter-space} view); ours is a complementary \emph{function-space} decomposition over NTK eigenmodes -- which output directions drift, rather than which components change.

\section{Theory: Forgetting as NTK Interference}
\label{sec:theory}

\begin{figure*}[t]
\centering
\includegraphics[width=0.92\linewidth]{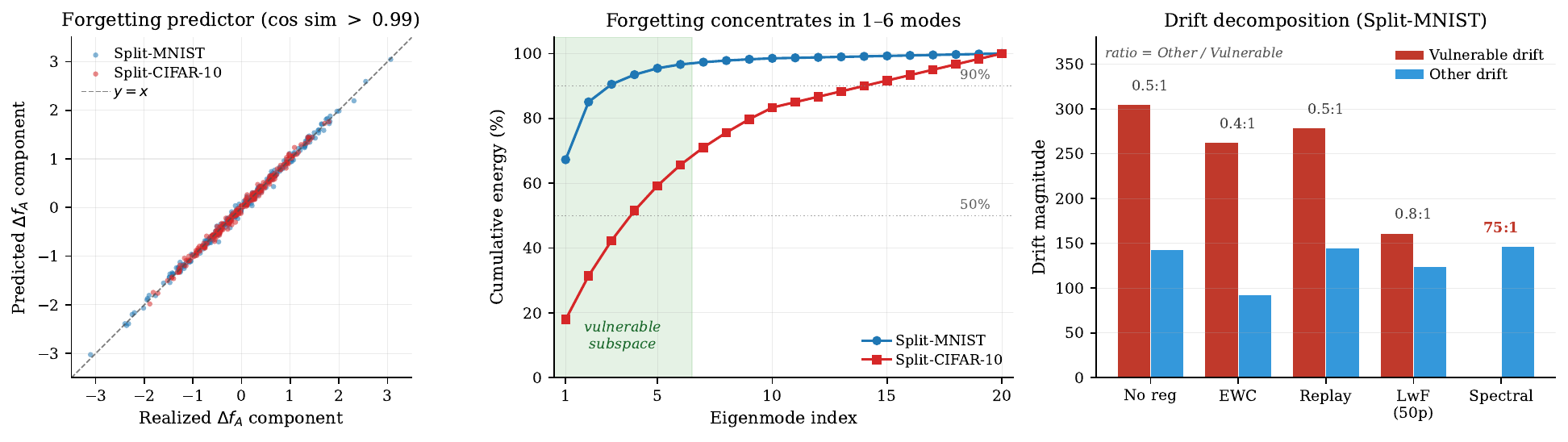}
\caption{\textbf{Left}: Predicted vs.\ realized $\Delta f_A$ (cos sim $>0.99$) on Split-MNIST/CIFAR-10. \textbf{Center}: Cumulative forgetting energy: 50--90\% in 1--6 eigenmodes. \textbf{Right}: Drift decomposition --- spectral reg targets the vulnerable subspace at $75{:}1$ on Split-MNIST ($1.7{:}1$ on the CNN-based Split-CIFAR-10, App.~\ref{app:drift_cifar}) vs.\ $<\!1{:}1$ for baselines.}
\label{fig:main}
\end{figure*}

\subsection{Setup}

We consider two-task continual regression: train on Task~A to $\theta_A \in \R^p$, then on Task~B from $\theta_A$ to $\theta_B$. \emph{Forgetting} is the induced shift in Task-A predictions, written as a flat vector:
\begin{equation}
\Delta f_A \,:=\, \mathrm{vec}\!\left[\,f_A(\theta_B) - f_A(\theta_A)\,\right] \,\in\, \R^{n_A d},
\label{eq:forgetting_def}
\end{equation}
where $\mathrm{vec}[\cdot]$ stacks the $n_A$ rows of the $n_A \times d$ output-shift matrix into a single column. We aim to predict $\Delta f_A$ -- direction and magnitude -- without running Task~B training.

\textbf{Notation.}\quad Let $f(\cdot\,,\theta):\mathcal{X}\to\R^d$ with $\theta\in\R^p$. Given probe set $X_A = \{x_i^A\}_{i=1}^{n_A}$ and training set $(X_B, y_B) = \{(x_i^B, y_i^B)\}_{i=1}^{n_B}$, stack outputs as $f_A(\theta)\in\R^{n_A d}$, $f_B(\theta),\,y_B\in\R^{n_B d}$. All Jacobians and kernels are evaluated at $\theta_A$:
\begin{align*}
J_A &\,:=\, \nabla_\theta f_A(\theta)\big|_{\theta_A}, \quad
J_B \,:=\, \nabla_\theta f_B(\theta)\big|_{\theta_A}, \\
K_{AA} &\,=\, J_A J_A^\top, \quad K_{BB} \,=\, J_B J_B^\top, \quad K_{AB} \,=\, J_A J_B^\top.
\end{align*}
Crucially, $J_B$ uses Task-B inputs but Task-A weights --- this is why the predictor is computable before Task~B training.

\textbf{Task~B objective.}\quad MSE with $L_2$ penalty on drift from $\theta_A$:
\begin{equation}
\mathcal{L}_B(\theta) \,=\, \tfrac{1}{2}\|f_B(\theta) - y_B\|^2 + \tfrac{\lambda}{2}\|\theta - \theta_A\|^2, \quad \lambda \geq 0.
\label{eq:taskB_loss}
\end{equation}
$\lambda = 0$ recovers vanilla MSE (interpreted as the minimum-$\|\delta\|$ interpolant under gradient flow from $\theta_A$), $\lambda > 0$ ensures a unique minimizer. We take $\theta_B \in \arg\min \mathcal{L}_B$.

\subsection{The Predictor}

\begin{proposition}[Forgetting Predictor]
\label{prop:predictor}
Under NTK linearization of $f$ around $\theta_A$~\citep{jacot2018neural} and convergence of Task~B training to a minimizer of $\mathcal{L}_B$,
\begin{equation}
\Delta f_A \,=\, -\,K_{AB}\,(K_{BB} + \lambda I)^{-1}\, r_B,
\label{eq:predictor}
\end{equation}
where $r_B := f_B(\theta_A) - y_B$ is the Task~B residual at $\theta_A$, and the inverse is the Moore-Penrose pseudoinverse when $\lambda = 0$.
\end{proposition}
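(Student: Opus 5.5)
Under NTK linearization around $\theta_A$ the problem becomes a linear least-squares problem in the displacement $\delta := \theta - \theta_A$. I will solve it in closed form and then push the solution through $J_A$. Concretely, replace $f_B(\theta)$ by $f_B(\theta_A) + J_B\delta$ and $f_A(\theta)$ by $f_A(\theta_A) + J_A\delta$. The Task~B objective~\eqref{eq:taskB_loss} then becomes the ridge problem
\[
\mathcal{L}_B(\delta) = \tfrac12\|r_B + J_B\delta\|^2 + \tfrac{\lambda}{2}\|\delta\|^2 ,
\]
and forgetting is exactly $\Delta f_A = J_A\delta_B$, where $\delta_B := \theta_B - \theta_A$. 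The whole proof therefore reduces to identifying $\delta_B$ and showing that $J_A\delta_B$ equals the right-hand side of~\eqref{eq:predictor}.

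\textbf{Case $\lambda>0$.} The objective is strictly convex, so the minimizer is unique. Setting the gradient to zero gives $(J_B^\top J_B + \lambda I)\delta_B = -J_B^\top r_B$. I then apply the push-through identity
\[
(J_B^\top J_B + \lambda I)^{-1}J_B^\top = J_B^\top (J_B J_B^\top + \lambda I)^{-1}.
\]
This follows by multiplying $J_B^\top(J_BJ_B^\top+\lambda I) = (J_B^\top J_B+\lambda I)J_B^\top$ on the left and right by the respective inverses, both of which exist when $\lambda>0$. It yields $\delta_B = -J_B^\top(K_{BB}+\lambda I)^{-1}r_B$. Multiplying by $J_A$ and using $J_AJ_B^\top = K_{AB}$ gives~\eqref{eq:predictor}.

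\textbf{Case $\lambda=0$.} This is the step needing care, since the minimizer set is an affine subspace and $\Delta f_A$ need not be unique across it. Following the convention stated after~\eqref{eq:taskB_loss}, I take $\theta_B$ to be the minimum-$\|\delta\|$ minimizer, which gradient flow from $\theta_A$ selects. Under linearization the flow is $\dot\delta = -J_B^\top(r_B + J_B\delta)$, so $\delta(t)$ stays in $\mathrm{row}(J_B) = \mathrm{range}(J_B^\top)$ for all $t$. I then use two facts:
\begin{enumerate}[leftmargin=*,topsep=0pt,itemsep=1pt]
\item The normal equations $J_B^\top J_B\delta = -J_B^\top r_B$ always have a solution, so the limit of the flow is a least-squares minimizer lying in $\mathrm{row}(J_B)$.
\item The least-squares minimizer lying in $\mathrm{row}(J_B)$ is unique, and it equals $-J_B^+ r_B$.
\end{enumerate}
By the standard pseudoinverse identity $J_B^+ = J_B^\top (J_BJ_B^\top)^+$, this gives $\delta_B = -J_B^\top K_{BB}^+ r_B$. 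Applying $J_A$ recovers~\eqref{eq:predictor} with the pseudoinverse. Alternatively, the same formula is the $\lambda\to0^+$ limit of the ridge case, because $J_B^\top(K_{BB}+\lambda I)^{-1}\to J_B^\top K_{BB}^+$. That limit can be checked in the eigenbasis of $K_{BB}$: components in $\ker K_{BB} = \ker J_B^\top$ are annihilated by $J_B^\top$.

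The main obstacle is this $\lambda=0$ case. There I must justify that the minimizer is the one selected by gradient flow (the row-space invariance argument), and note that the result is exact only insofar as the linearization holds. For frozen-backbone linear heads the model is linear in $\theta$, so the linearization is an identity and the result is exact. Otherwise it is a first-order statement.
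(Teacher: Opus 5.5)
Your proposal is correct and follows the paper's proof essentially step for step: linearize around $\theta_A$, solve the ridge normal equations, pass to the dual form via the push-through identity, and apply $J_A$ to obtain $-K_{AB}(K_{BB}+\lambda I)^{-1}r_B$. Your handling of $\lambda=0$ is more careful than the paper's, which simply asserts that the minimum-norm minimizer is taken; you justify that choice through row-space invariance of the linearized gradient flow, the identity $J_B^+ = J_B^\top (J_BJ_B^\top)^+$, and the $\lambda\to 0^+$ limit.
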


\begin{proof}[Proof sketch]
With $\delta := \theta - \theta_A$, linearizing $f_B(\theta_A + \delta) \approx f_B(\theta_A) + J_B\delta$ turns $\mathcal{L}_B$ into the convex quadratic $\tfrac{1}{2}\|J_B\delta + r_B\|^2 + \tfrac{\lambda}{2}\|\delta\|^2$, whose minimizer solves the ridge normal equations $(J_B^\top J_B + \lambda I)\delta = -J_B^\top r_B$. The push-through identity $(M^\top M + \lambda I)^{-1} M^\top = M^\top(MM^\top + \lambda I)^{-1}$ with $M = J_B$ rewrites this in the tractable dual form $\delta^\star = -J_B^\top(K_{BB} + \lambda I)^{-1} r_B$. Applying the Task-A linearization, $\Delta f_A \approx J_A\delta^\star = -K_{AB}(K_{BB} + \lambda I)^{-1} r_B$. Full step-by-step derivation in App.~\ref{app:proof}.
\end{proof}

\noindent Empirically, \eqref{eq:predictor} achieves $\cos\mathrm{sim}(\Delta f_A^\text{pred}, \Delta f_A^\text{real}) > 0.99$ on Split-MNIST and Split-CIFAR-10 (Fig.~\ref{fig:main}, left), and is structurally exact in the frozen-backbone linear-head PEFT-CL regime, with $1-\cos\mathrm{sim}$ down to $10^{-6}$ on ViT-B/16 and DINOv2 (App.~\ref{app:precision}).
\subsection{Structural Consequences}
\label{sec:structural}

Proposition~\ref{prop:predictor} delivers $\Delta f_A$ as the action of the cross-task kernel $K_{AB}$ on the Task-B residual. Three structural properties follow directly and shape the rest of the paper: forgetting lives in a low-rank subspace, linearization is exact under a frozen backbone with a trainable linear head, and a Kronecker factorization fixes the vulnerable rank.

\textbf{Low-rank structure.}\quad The predictor in Eq.~\eqref{eq:predictor} factors as
\begin{equation}
\Delta f_A \,=\, -\,J_A J_B^\top (K_{BB} + \lambda I)^{-1}\, r_B,
\label{eq:predictor_factored}
\end{equation}
so $\Delta f_A$ lies in $\mathrm{Im}(J_A) = \mathrm{Im}(K_{AA})$ (for any real $J_A$, $\mathrm{Im}(J_AJ_A^\top)=\mathrm{Im}(J_A)$). Expanding in the eigenbasis $K_{AA} = U\Lambda U^\top$ as $\Delta f_A = \sum_i c_i u_i$ with $c_i = u_i^\top \Delta f_A$, and substituting the SVD $J_A = U \Sigma V_A^\top$ (so $u_i^\top J_A = \sigma_i v_{A,i}^\top$) into Eq.~\eqref{eq:predictor_factored}, the coefficient on mode $i$ is
\begin{equation}
c_i \,=\, -\,\sigma_i\, v_{A,i}^\top J_B^\top (K_{BB} + \lambda I)^{-1}\, r_B,
\label{eq:ci}
\end{equation}
so $|c_i|$ inherits the decay of $\sigma_i$ modulated by the alignment of $v_{A,i}$ with the residual-driven Task-B direction $J_B^\top (K_{BB} + \lambda I)^{-1} r_B$. When $\Lambda$ decays rapidly -- as it does for standard architectures by spectral bias~\citep{rahaman2019spectral} -- and the cross-task alignment factor is not adversarially concentrated on small-$\sigma_i$ directions, $\Delta f_A$ is expected to concentrate in the top eigenmodes of $K_{AA}$. We call $\mathrm{span}(u_1,\ldots,u_k)$ the \emph{vulnerable subspace}: the output directions along which Task-B training can move Task-A predictions. Section~\ref{sec:experiments} measures $k$ empirically and finds 1--6 modes carry 50--90\% of forgetting energy.

\paragraph{Exact linearization under a linear probe.} When $f$ is linear in $\theta$ -- a frozen backbone with a trainable linear head -- the Taylor expansion is an equality and Proposition~\ref{prop:predictor} holds exactly, checkable at machine precision (Section~\ref{sec:experiments}). For nonlinear adapters or full fine-tuning the model is not linear in the trainable parameters, and the predictor becomes a local NTK approximation around $\theta_A$.

\paragraph{Is it forgetting that is low-rank, or learning?} Spectral bias makes the NTK low-rank in any task~\citep{rahaman2019spectral}; the novel content of Eq.~\eqref{eq:ci} is not that $\Lambda$ decays but that the \emph{forgetting} coefficient $c_i$ is set by a specific cross-task product $v_{A,i}^\top J_B^\top (K_{BB}+\lambda I)^{-1} r_B$. Two factors compound: spectral-bias decay of $\sigma_i$ (how Task-A learning is itself low-rank), and a cross-task alignment factor that selects which of those modes Task-B can actually move (App.~\ref{app:lowrank}).

\begin{remark}[Kronecker structure and $k$-scaling]
\label{rem:kronecker}
For a frozen feature map $\phi(x)\in\R^F$ and trainable linear head $W\in\R^{C\times F}$, with $\theta=\mathrm{vec}(W)$ and $f_c(x)=W_c^\top\phi(x)$, the MSE Jacobian is block-diagonal across output classes. Therefore

\begin{equation}
K_{AA} \,=\, I_C \otimes G, \qquad G_{ij} = \phi(x_i)^\top \phi(x_j),
\label{eq:kronecker}
\end{equation}
where $G \in \R^{n_A \times n_A}$ is the (single-output) feature Gram and $I_C$ is the $C \times C$ identity. Consequently every eigenvalue of $G$ has multiplicity $C$ in $K_{AA}$, and the dimension of the vulnerable subspace scales as
\begin{equation}
k^\star \,\approx\, C \cdot k_G,
\label{eq:k_scaling}
\end{equation}
where $k_G$ is the effective rank of $G$ -- the number of dominant eigenvalues of the feature Gram. We observe $k_G \in [1, 5]$ empirically on standard pretrained features (Appendix~\ref{app:sensitivity}). Thus for $C{=}10$ outputs $k^\star \in [10, 50]$, and for $C{=}100$ outputs $k^\star \in [100, 500]$ -- collapsing toward $k^\star \approx 100$ on the near-rank-one frozen CIFAR-100 features ($k_G \approx 1$--$2$ measured; App.~\ref{app:sensitivity}), the matched point used in Section~\ref{sec:peft_cl}. Under softmax cross-entropy, the normalization couples rows of $W$, so Eq.~\eqref{eq:kronecker} holds only approximately and the rule survives as a design heuristic with a softer plateau (Section~\ref{sec:limitations}). This scaling drives the convergence in Section~\ref{sec:peft_cl}: with $C$ large, the LwF-style ``broad'' penalty already lives mostly inside the $C \cdot k_G$-dimensional vulnerable subspace.
\end{remark}

\section{A Theory-Derived Probe: Spectral Regularization}
\label{sec:method}

Since forgetting concentrates in the top-$k$ eigenspace of $K_{AA}$, we penalize drift specifically there. After Task~$\tau$, compute top-$k$ eigenvectors $\{u_j^{(\tau)}\}$ of $K_{\tau\tau}$ on $n_\text{probe}$ probes and store $f_\tau^\text{ref} = f_\tau(\theta_\tau)$, during subsequent training,
\begin{equation}
\mathcal{L}(\theta) = \mathcal{L}_\text{new}(\theta) + \sum_{\tau < t}\frac{\mu}{k}\sum_{j=1}^{k}\left(u_j^{(\tau)\top}\left[f_\tau(\theta) - f_\tau^\text{ref}\right]\right)^2.
\label{eq:spectral_reg}
\end{equation}
Drift in the $(nd{-}k)$-dimensional complement is \emph{unconstrained}, granting full plasticity outside the vulnerable subspace -- unlike EWC (all $p$ parameter directions) and LwF (all $nd$ output directions).

\section{Experiments}
\label{sec:experiments}

\textbf{Setup.}\quad Three benchmarks: Split-MNIST (5 tasks, MLP) and Split-CIFAR-10 (5 tasks, $\sim$200k-param CNN), each in shared-head (single output layer over the union of classes -- the harder regime) and multi-head variants; Split-CIFAR-100 with a frozen ImageNet ResNet-18 (10 tasks, frozen-backbone linear-head PEFT-CL regime, where linearization is exact). Baselines span parameter regularization (EWC~\citep{kirkpatrick2017overcoming}, SI~\citep{zenke2017continual}), gradient projection (GPM~\citep{saha2021gradient}), functional distillation (LwF~\citep{li2017learning}), replay (random, DER++~\citep{buzzega2020dark}), and a no-reg control. We report final-task accuracy and \emph{forgetting} (mean per-task accuracy drop) over 5--10 seeds. Apps.~\ref{app:multihead}--\ref{app:sensitivity} cover multi-head, probe scaling, and sensitivity.

\begin{table}[t!]
\centering
\caption{\textbf{Shared-head results:} parameter-space methods fail, function-space methods succeed. Std shown for the top two methods (5/10 seeds). Spectral beats LwF on CIFAR-10 ($p{=}0.002$).}
\label{tab:shared}
\small
\setlength{\tabcolsep}{5pt}
\renewcommand{\arraystretch}{0.95}
\begin{tabular}{lcccc}
\toprule
& \multicolumn{2}{c}{Split-MNIST} & \multicolumn{2}{c}{Split-CIFAR-10} \\
\cmidrule(lr){2-3}\cmidrule(lr){4-5}
Method & Acc $\uparrow$ & Fgt $\downarrow$ & Acc $\uparrow$ & Fgt $\downarrow$ \\
\midrule
No reg    & 19.7 & 99.6 & 17.5 & 85.8 \\
EWC       & 19.8 & 99.5 & 18.7 & 88.7 \\
SI        & 22.2 & 96.4 & 17.8 & 85.5 \\
\midrule
Replay    & 56.7 & 53.3 & 21.5 & 82.7 \\
DER++     & 65.9 & 42.0 & 28.4 & 78.6 \\
\midrule
LwF       & \textbf{80.3}\tiny{$\pm$1.4} & 23.9\tiny{$\pm$1.8} & 29.0\tiny{$\pm$2.0} & 77.4\tiny{$\pm$2.4} \\
Spectral  & 77.9\tiny{$\pm$3.0} & \textbf{11.3}\tiny{$\pm$0.5} & \textbf{32.0}\tiny{$\pm$2.1} & \textbf{51.6}\tiny{$\pm$5.1} \\
\bottomrule
\end{tabular}
\end{table}
\subsection{Parameter-Space Methods Fail on Shared-Head}
\label{sec:shared_head}
Table~\ref{tab:shared} shows a clean divide: parameter-space methods (EWC, SI) match no-reg while function-space methods improve substantially. The mechanism is geometric -- forgetting concentrates in a low-dimensional subspace of \emph{output} space, but diagonal Fisher is anisotropic in parameter coordinates and is not constructed to align with the rank-$k$ output-space projector $U_k U_k^\top$, so EWC can slow drift broadly without preferentially protecting the vulnerable $K_{AA}$ eigenmodes. A single-step drift-decomposition probe confirms this: EWC and no-reg leave indistinguishable Other:Vuln ratios ($0.033{:}1$ vs $0.028{:}1$), whereas spectral reg flips it to $32.7{:}1$; EWC scales both components down uniformly but cannot translate per-parameter Fisher mass into selective output-mode protection (App.~\ref{app:ewc}). Non-diagonal methods (GPM) fail for the same reason (App.~\ref{app:gpm}).

\subsection{Drift Decomposition: Targeted Protection}
\label{sec:drift}
\begin{table}

\centering
\caption{\textbf{Direct test of the low-rank claim.} Drift decomposition after 5 tasks on Split-MNIST (shared-head, $k{=}10$). Spectral reg suppresses vulnerable-subspace drift $150\times$. CIFAR-10 in App.~\ref{app:drift_cifar}.}
\label{tab:drift}
\small
\begin{tabular}{lccc}
\toprule
Method & Vuln.\ $\downarrow$ & Other & Ratio \\
\midrule
No reg              & 305.1 & 142.8 & 0.5:1 \\
EWC (best)          & 262.5 & 92.9  & 0.4:1 \\
Replay (100)        & 278.9 & 144.5 & 0.5:1 \\
LwF (50p)           & 161.4 & 124.4 & 0.8:1 \\
Spectral ($\mu{=}10$) & \textbf{2.0} & 146.7 & \textbf{75:1} \\
\bottomrule
\end{tabular}
\end{table}
Decomposing $\Delta f_A$ inside vs.\ outside the vulnerable subspace isolates the targeting mechanism (Table~\ref{tab:drift}): LwF reduces both proportionally, while spectral reg suppresses only the vulnerable component -- exactly the structural distinction the theory predicts.
\subsection{Function-Space Methods Converge Under the Scaling Rule}
\label{sec:peft_cl}
On Split-CIFAR-100 ($C{=}100$, 20 probes), spectral reg at $k{=}100$ and LwF are indistinguishable ($39.9{\pm}1.5\%$ vs $39.8{\pm}1.2\%$): matched at $k^\star \approx C \cdot k_G$, targeted and broad function-space methods converge. Outside this regime they trade off as predicted (App.~\ref{app:probe}), and spectral protection stabilizes the old-class decision boundary against inter-task \emph{confusion} (App.~\ref{app:cil}).
\section{Conclusion}
This work reframes catastrophic forgetting as low-rank interference in function space. In the NTK regime, a cross-task kernel expression predicts the forgetting vector before new-task training and identifies a small vulnerable eigenspace of $K_{AA}$. For frozen-backbone linear heads, the analysis is exact and yields $k^\star \approx Ck_G$; for nonlinear adapters or full fine-tuning, it is a local approximation. Empirically, this explains why parameter-space regularizers can fail on shared-head benchmarks and why spectral function-space regularization protects the relevant modes. Extending the mechanism to cross-entropy, evolving representations, and long task sequences remains open.

\small
\bibliographystyle{plainnat}

\normalsize
\newpage
\appendix
\section{Proof of Proposition~\ref{prop:predictor}}
\label{app:proof}

Let $\delta := \theta - \theta_A$.

\textbf{Step 1: Linearize around $\theta_A$.}
\begin{equation}
f_B(\theta_A + \delta) \,\approx\, f_B(\theta_A) + J_B \delta. \label{eq:linearize}
\end{equation}

\textbf{Step 2: Reduce to quadratic.} Substituting \eqref{eq:linearize} into \eqref{eq:taskB_loss}:
\begin{equation}
\mathcal{L}_B(\theta_A + \delta) \,\approx\, \tfrac{1}{2}\|J_B \delta + r_B\|^2 + \tfrac{\lambda}{2}\|\delta\|^2. \label{eq:quadratic_loss}
\end{equation}
Convex in $\delta$, with a unique minimizer (minimum-norm for $\lambda=0$).

\textbf{Step 3: First-order condition.} Setting $\nabla_\delta \mathcal{L}_B = 0$ yields the ridge normal equations
$(J_B^\top J_B + \lambda I_p)\,\delta = -J_B^\top r_B$, hence
\begin{equation}
\delta^\star \,=\, -(J_B^\top J_B + \lambda I_p)^{-1} J_B^\top\, r_B. \label{eq:delta_primal}
\end{equation}

\textbf{Step 4: Push-through identity.} For any $M \in \R^{m\times p}$,
\begin{equation}
(M^\top M + \lambda I_p)^{-1} M^\top \,=\, M^\top (MM^\top + \lambda I_m)^{-1},
\label{eq:pushthrough}
\end{equation}
applied to \eqref{eq:delta_primal} with $M = J_B$:
\begin{equation}
\delta^\star \,=\, -J_B^\top\,(K_{BB} + \lambda I)^{-1}\,r_B. \label{eq:delta_dual}
\end{equation}
The inverse is now $n_B d \times n_B d$ rather than $p \times p$ -- the tractable dual form.

\textbf{Conclusion.} Applying the linearization to Task~A:
\begin{align*}
\Delta f_A \,&\approx\, J_A\, \delta^\star
\,=\, -J_A J_B^\top\,(K_{BB} + \lambda I)^{-1}\,r_B \\
\,&=\, -K_{AB}\,(K_{BB} + \lambda I)^{-1}\,r_B. \qquad\blacksquare
\end{align*}

\section{Limitations}
\label{sec:limitations}

\textbf{Scope: where the predictor is exact and where it is approximate.}\quad Proposition~\ref{prop:predictor} is derived under four assumptions: (i) NTK linearization of $f$ around $\theta_A$, (ii) convergence of Task-B training to a minimizer of $\mathcal{L}_B$, (iii) MSE loss with an optional ridge anchor, and (iv) trainable parameters restricted to a head (or low-rank adapter) on top of frozen pretrained features (the frozen-backbone linear-head PEFT-CL regime). When all four hold simultaneously -- as in frozen-backbone linear probing -- the predictor is an exact identity at machine precision (Table~\ref{tab:multi_backbone}): it is exact precisely when the model is linear in the trainable parameters (a trainable linear head on a frozen backbone), and is a local NTK approximation around $\theta_A$ for nonlinear adapters or full fine-tuning. When the head is trainable but the backbone is frozen and shallow MLPs/CNNs are trained from scratch, the predictor remains tight ($\cos\mathrm{sim} > 0.99$) because the operating regime is close to lazy. Outside these regimes the predictor degrades along three orthogonal axes, which we list explicitly because they delimit the claims of this paper:
\begin{itemize}[leftmargin=*,topsep=2pt,itemsep=1pt]
\item \emph{Full fine-tuning of deep backbones.} The Jacobian $J_A$ evaluated at $\theta_A$ does not see feature drift in lower layers, and $K_{AA}$ itself becomes time-varying during Task-B training with eigenstructure that evolves as representations do. The predictor then captures only the linearized component of $\Delta f_A$.
\item \emph{Continual instruction tuning of LLMs.} The frozen-backbone linear-head PEFT-CL regime assumption (small adapter/head) is closer to LoRA-style instruction tuning than to full fine-tuning, but the MSE assumption is violated: token-level cross-entropy with a softmax over a large vocabulary breaks both the Kronecker factorization of Remark~\ref{rem:kronecker} and the closed-form residual $r_B$.
\item \emph{Long task sequences with cumulative drift.} Even when each pairwise step is well predicted, error in $K_{AA}$ compounds across tasks; we measure this drift in App.~\ref{app:stability} but do not give a multi-step predictor.
\end{itemize}
Extension via gradient sketching, block-diagonal Jacobian approximations, an online estimate of $K_{AA}$ tracked through training, and a softmax-aware analogue of Eq.~\eqref{eq:predictor} are natural next steps.

\textbf{Cross-entropy breaks strict block-decoupling.}\quad Remark~\ref{rem:kronecker}'s factorization $K_{AA} = I_C \otimes G$ relies on MSE loss, under which output rows are gradient-decoupled. Softmax cross-entropy couples rows through the normalization, so the factorization holds only approximately. The $k^\star \approx C \cdot k_G$ rule remains a useful design heuristic under CE but may require modest empirical adjustment consistent with the soft plateau we observe on MNIST at $k \in [10, 50]$ rather than a sharp optimum (App.~\ref{app:sensitivity}).

\textbf{Relation to prior NTK-CL theory.}\quad The two-task expression in Proposition~\ref{prop:predictor} is closely related to the form derived by~\citet{doan2021theoretical}: both arise from substituting the NTK linearization into a quadratic loss and applying the push-through identity. Our contribution is not to add a new derivation but to read this expression as a structured operator on output space, exposing the eigenstructure of $K_{AA}$, the Kronecker factorization in the frozen-backbone linear-head regime, and the resulting scaling rule that ties together targeted and broad function-space methods. The closed-form predictor and the structural-rank picture should be taken as a single package.

\section{Predictor Precision Across Backbones}
\label{app:precision}

\begin{table}[h]
\centering
\caption{Predictor precision across frozen backbones (Split-CIFAR-100, MSE, 3 seeds). Values are $1 - \cos\mathrm{sim}$, lower is sharper. ResNet-18's gap is an SGD-convergence artifact on a less well-conditioned feature Gram.}
\label{tab:multi_backbone}
\small
\setlength{\tabcolsep}{4pt}
\begin{tabular}{lcc}
\toprule
Backbone & 10 tasks & 20 tasks \\
\midrule
ResNet-18  & $6.4 \pm 1.2\,(\text{e-}3)$ & $6.1 \pm 1.4\,(\text{e-}3)$ \\
ViT-B/16   & $6 \pm 4\,(\text{e-}6)$     & $35 \pm 35\,(\text{e-}6)$ \\
DINOv2     & $<1\,(\text{e-}6)$          & $<1\,(\text{e-}6)$ \\
\bottomrule
\end{tabular}
\end{table}

In the frozen-backbone linear-head PEFT-CL regime the NTK linearization is exact, so Proposition~\ref{prop:predictor} holds at machine precision; the residual gap on ResNet-18 reflects SGD convergence rather than linearization error.

\section{Method Overview Figure}
\label{app:overview}

\begin{figure}[H]
\centering
\includegraphics[width=\linewidth]{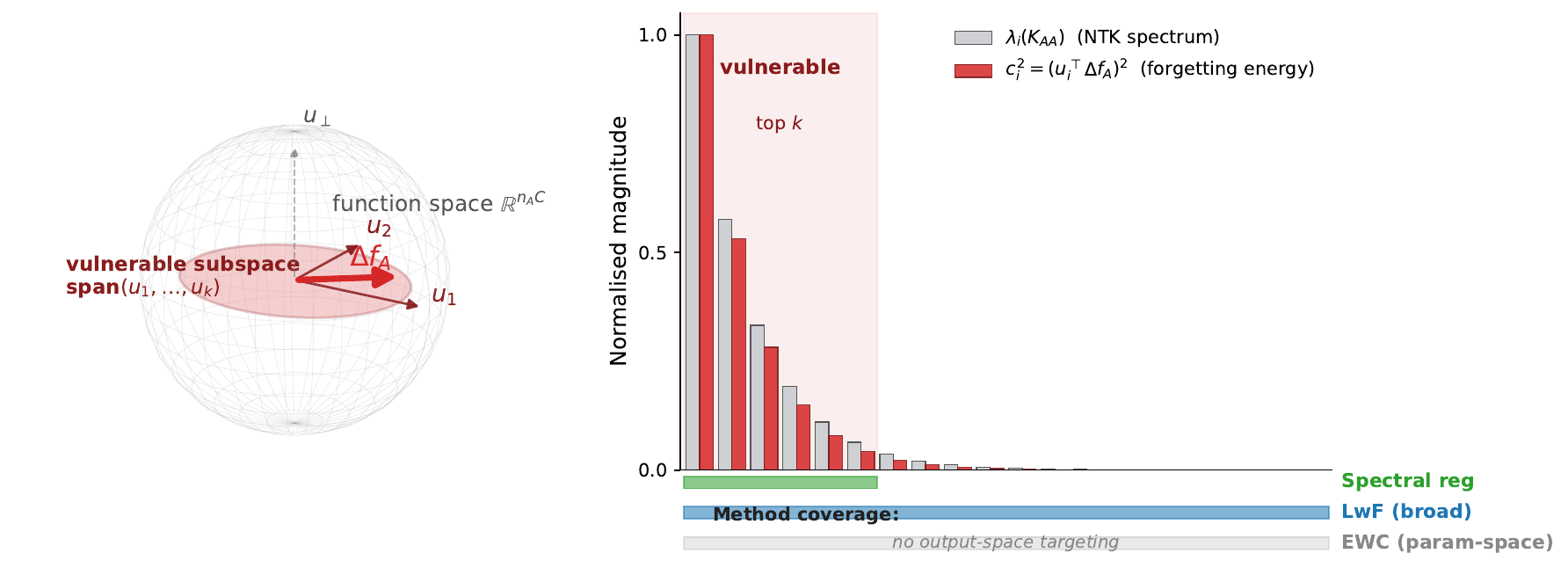}
\caption{\textbf{Method overview.} \emph{Left}: Forgetting $\Delta f_A$ lies in the column space of $K_{AA}$ (Prop.~\ref{prop:predictor}), and its energy concentrates on a low-rank slice -- the vulnerable subspace $\mathrm{span}(u_1,\ldots,u_k)$ spanned by the top eigenvectors of $K_{AA}$. The complementary $u_\perp$ directions are unprotected by construction. \emph{Right}: the NTK spectrum decays rapidly (grey), and the forgetting-energy coefficients $c_i^2 = (u_i^\top \Delta f_A)^2$ inherit this decay (red); 50--90\% of $\Delta f_A$'s energy lives in the first 1--6 modes (Fig.~\ref{fig:main}, center). \emph{Spectral reg} (\S\ref{sec:method}) penalizes drift exactly on those modes and leaves the complement free; \emph{LwF} penalizes every output direction uniformly; \emph{EWC} acts in parameter space and has no mechanism to selectively reach output modes (\S\ref{sec:shared_head}, App.~\ref{app:ewc}).}
\label{fig:method}
\end{figure}

\section{Low-Rank Forgetting vs.\ Low-Rank Learning}
\label{app:lowrank}

\textbf{The confound.} The center panel of Fig.~\ref{fig:main} shows the forgetting vector concentrating in the top eigenmodes of $K_{AA}$. Taken alone this is not conclusive: the NTK spectrum $\lambda_i$ already decays rapidly by spectral bias~\citep{rahaman2019spectral}, so \emph{any} vector expressed in this eigenbasis tends to look top-heavy. Concentration could therefore be a generic property of the architecture rather than a fact about forgetting. To see which, write the per-mode coefficient of Eq.~\eqref{eq:ci} as a product of two factors:
\begin{equation}
c_i \;=\; \underbrace{\sigma_i}_{\text{(i) spectral decay}}\;\cdot\;\underbrace{v_{A,i}^\top J_B^\top (K_{BB}+\lambda I)^{-1} r_B}_{\text{(ii) cross-task alignment}}.
\label{eq:two_factors}
\end{equation}
Factor (i) is the singular value $\sigma_i=\sqrt{\lambda_i}$; it shrinks high modes \emph{regardless of the task} -- pure spectral bias, shared by learning and forgetting. Factor (ii) depends on the \emph{actual} new task: it measures how the Task-B residual $r_B$, passed through the new-task kernel, projects onto the $i$-th old-task direction. The eigenvalue decay (i) is not the novel content; the task-specific selection (ii) is.

\textbf{A control that isolates the two factors.} To test whether the observed concentration is (i) or (ii), we keep the \emph{same} operators $K_{AB},K_{BB}$ (so factor (i) is untouched) but replace the real residual $r_B$ in Eq.~\eqref{eq:predictor} with a random residual of equal norm, which randomizes factor (ii). If the concentration were just spectral bias, the random residual -- fed through the identical decaying kernel -- would concentrate just as tightly. On Split-MNIST (3 seeds) we report, for four energy profiles in the relevant NTK eigenbasis, the participation ratio $\mathrm{PR}=(\sum_i e_i)^2/\sum_i e_i^2$ (lower $=$ more concentrated) and the number of modes carrying 50/90\% of the energy ($k_{50}/k_{90}$), in an $n_AC{=}500$-dimensional output space:

\begin{center}
\small
\begin{tabular}{lcccc}
\toprule
Energy profile & $k_{50}$ & $k_{90}$ & top-1 & PR \\
\midrule
$K_{AA}$ spectrum $\lambda_i$ (capacity)  & 6.0 & 47.3 & 0.15 & 16.5 \\
Realized forgetting $\Delta f_A$          & 1.0 & 4.0  & 0.78 & \textbf{1.6} \\
Random-residual control                   & 52.0 & 217.0 & 0.01 & 123.0 \\
Realized learning $\Delta f_B$            & 1.3 & 3.7  & 0.65 & 2.2 \\
\bottomrule
\end{tabular}
\end{center}
The realized forgetting vector is far more concentrated ($\mathrm{PR}{=}1.6$, $97\%$ of its energy in $6$ modes, $78\%$ in a single mode) than the bare $K_{AA}$ spectrum ($\mathrm{PR}{=}16.5$): spectral bias alone does \emph{not} explain it. The random-residual control, fed through the identical operators, instead scatters across $\mathrm{PR}{=}123$ modes -- \emph{more} spread than even the bare spectrum. So the tight concentration is produced by factor (ii), the cross-task alignment of the \emph{real} residual, not by the kernel's decay. This is precisely why a cross-task predictor is needed and the spectrum is not enough: factor (i) says the low-eigenvalue modes \emph{exist}, but only factor (ii) -- the $K_{AB}$/$r_B$ product -- says \emph{which} of them the new task will actually disturb.

This also settles the question of whether it is forgetting or learning that is low-rank. The learning drift $\Delta f_B$, decomposed in the $K_{BB}$ eigenbasis, is itself low-rank ($\mathrm{PR}{=}2.2$) -- a direct consequence of spectral bias, factor (i). Both learning and forgetting are low-rank for that shared reason; forgetting is \emph{additionally} sharpened by cross-task alignment (factor (ii)) into a still lower-dimensional, task-pair-specific subspace. The novelty of the low-rank claim is thus not the decay of $\Lambda$ (generic) but the alignment-driven selection of a particular vulnerable direction.

\section{Why EWC Cannot Protect the Vulnerable Subspace}
\label{app:ewc}

EWC~\citep{kirkpatrick2017overcoming} penalizes parameter drift weighted by the diagonal Fisher information $F_{ii}$, anchoring each parameter to its Task-A value with strength proportional to $F_{ii}$. The vulnerable subspace, by contrast, is defined in \emph{output} space: it is the span of the top-$k$ eigenvectors of $K_{AA} = J_A J_A^\top$. There is no general mechanism by which a per-parameter diagonal penalty maps onto a selected set of output-space eigenmodes. Diagonal Fisher is anisotropic in parameter coordinates, but it is not constructed to align with the rank-$k$ output-space projector $U_k U_k^\top$. Thus EWC can slow drift broadly, but it need not preferentially protect the vulnerable $K_{AA}$ eigenmodes.

We verify this with a single-step drift-decomposition experiment that asks exactly the question of interest: does EWC's regularization preferentially suppress drift inside the top-$k$ eigenspace of $K_{AA}$? After training Task~A, we record $U_k$ on probes, then train Task~B under (i) no regularization, (ii) EWC, and (iii) spectral regularization, and decompose the realized $\Delta f_A$ into its vulnerable and complementary components (Split-MNIST, $k{=}10$, 3 seeds):

\begin{center}
\small
\begin{tabular}{lccc}
\toprule
Method & $\|U_k^\top \Delta f_A\|^2$ & $\|(I{-}U_kU_k^\top)\Delta f_A\|^2$ & Other:Vuln \\
\midrule
No reg   & $177{\rm k} \pm 20{\rm k}$ & $5.0{\rm k}\pm 0.1{\rm k}$ & $0.028\!:\!1$ \\
EWC      & $\;85{\rm k} \pm 9{\rm k}$ & $2.8{\rm k}\pm 0.5{\rm k}$ & $0.033\!:\!1$ \\
Spectral & $\;\;\;25 \pm 6$            & $820 \pm 255$              & $32.7\!:\!1$ \\
\bottomrule
\end{tabular}
\end{center}

EWC and no-reg have indistinguishable Other:Vuln ratios: EWC scales both components down by roughly $2\times$, but does not preferentially shrink the vulnerable component. Spectral regularization flips the ratio by three orders of magnitude. This is the operational answer to how Fisher-based regularization aligns with the vulnerable subspace: under realized drift, it does not. The five-task drift decomposition in Section~\ref{sec:drift} confirms the same pattern at scale (Table~\ref{tab:drift}).

\textbf{The failure is not mis-weighting -- it is the parameter-space penalty form.} A reviewer might expect EWC to fail because diagonal Fisher \emph{mis-points}, putting its mass on the wrong (non-vulnerable) parameters. The opposite is true. Pushing the diagonal Fisher $F_{ii}$ through $J_A$ into output space and comparing it to the per-parameter vulnerable-mode mass $\mathrm{diag}(J_A^\top U_kU_k^\top J_A)$ (Split-MNIST, $k{=}10$, 3 seeds), the two are \emph{strongly} aligned: Spearman $\rho = 0.996$, cosine $0.94$, and $55\%$ of the total Fisher trace already lies inside the top-$k$ vulnerable subspace. EWC's weights sit on essentially the right parameters. Yet realized drift (above) shows no selective protection, because a diagonal \emph{quadratic} parameter anchor only rescales how fast each coordinate moves; it cannot impose the rank-$k$ output-space constraint $U_k^\top\Delta f_A \approx 0$ that the geometry requires. Selective protection of a low-rank \emph{output} subspace needs a function-space penalty (Eq.~\eqref{eq:spectral_reg}), not a better-weighted parameter-space one -- which is precisely why spectral regularization succeeds where EWC, with near-identical Fisher targeting, does not.

\section{Inter-Task Confusion in Class-Incremental Learning}
\label{app:cil}

In class-incremental settings, the shared-head classifier is never jointly trained to discriminate between classes from different tasks; the dominant failure mode is inter-task \emph{confusion} -- the relative ordering of old-task logits is corrupted by drift introduced when fitting new-task logits. Our predictor identifies precisely this corruption in closed form: the top eigenmodes of $K_{AA}$ that carry $\Delta f_A$ are the directions along which the old-class logit field can move. Protecting them with spectral regularization stabilizes the old-class decision boundary against the cross-task force $K_{AB}(K_{BB}+\lambda I)^{-1} r_B$, while leaving the head free to fit new-task logits in the complement. This is a structural reason for the consistent gains seen on shared-head benchmarks (Table~\ref{tab:shared}), and explains why parameter-space methods, which lack a way to selectively constrain the old-class output field, cannot recover inter-task discrimination.

We make this concrete on Split-MNIST: after training the new task on a shared $10$-way head, we evaluate on held-out \emph{old-task} test images (true labels $0$--$4$) and measure the inter-task confusion rate -- the fraction predicted into a new-task class $\{5,\ldots,9\}$ -- alongside accuracy restricted to the old-class logits (task identity known), 3 seeds:
\begin{center}
\small
\begin{tabular}{lccc}
\toprule
Method & Old-class acc.\ $\uparrow$ & Confusion $\to$ new $\downarrow$ & Acc.\ $|$ task $\uparrow$ \\
\midrule
No reg   & $0.0\%$  & $100.0\%$ & $29.8\%$ \\
EWC      & $1.3\%$  & $98.7\%$  & $84.4\%$ \\
Spectral & $\mathbf{93.5\%}$ & $\mathbf{0.0\%}$ & $\mathbf{93.5\%}$ \\
\bottomrule
\end{tabular}
\end{center}
The decomposition is revealing: EWC actually preserves the \emph{within}-old-task ordering ($84.4\%$ accuracy once the task is known), yet $98.7\%$ of old-class images are still classified as new-task classes -- the new logits overwhelm the old ones because EWC cannot constrain the cross-task output direction. Spectral regularization, by protecting exactly that direction, drives inter-task confusion from $100\%$ to $0\%$ and recovers old-class accuracy without any task oracle.

\section{GPM and Other Non-Diagonal Parameter-Space Methods}
\label{app:gpm}

GPM~\citep{saha2021gradient} extends parameter-space regularization beyond EWC's diagonal Fisher by projecting new-task gradients orthogonal to the top-energy subspace of old-task gradients. It is therefore not subject to the rank mismatch of Section~\ref{sec:shared_head}: its protection operator is itself low-rank. The mechanism it targets, however, is different from ours. GPM identifies directions in \emph{parameter} space along which old-task gradients had large energy, whereas the vulnerable subspace identified by Proposition~\ref{prop:predictor} consists of directions in \emph{output} space along which $K_{AA}$ has large eigenvalues. On Split-CIFAR-100 (10 tasks, frozen ResNet-18, frozen-backbone linear-head PEFT-CL regime), GPM achieves $9.7$--$10.1\%$ final-task accuracy across energy thresholds, matching diagonal-Fisher EWC ($9.9\%$) and dominated by spectral regularization ($39.9\%$) and LwF ($39.8\%$). Gradient-energy bases simply do not align with output-interference directions -- which function-space methods target by construction. The conclusion is that the failure of parameter-space methods on shared-head benchmarks is not specific to diagonal Fisher: it is a consequence of solving the wrong geometric problem (input/parameter gradient energy) for the wrong target (output-space drift).

\section{Subspace Stability}
\label{app:stability}

We measure whether the vulnerable subspace $\mathrm{span}(u_1, \ldots, u_k)$ remains valid as the model trains on subsequent tasks, via principal angles between $U_k$ at $\theta_A$ (after Task~0) and $U_k$ recomputed at each subsequent task boundary on the \emph{same} Task-0 probes. On Split-MNIST, the mean principal angle plateaus at $25.9^\circ$ after 4 task transitions; the bulk geometry is preserved, though individual directions may rotate up to $83^\circ$. On Split-CIFAR-10, rotation is faster (mean $34.0^\circ$), quantitatively explaining why spectral regularization gains are smaller on CNNs than MLPs. The gap between mean (${\sim}25$-$35^\circ$) and max (${\sim}80$-$90^\circ$) angles reveals anisotropic rotation: a few eigendirections rotate substantially while the majority remain stable.

\section{Cross-Task Coupling Predicts Forgetting Magnitude}
\label{app:coupling}

The predictor identifies the \emph{direction} of forgetting; the Frobenius norm of $K_{AB}$ also enables \emph{magnitude forecasting} across task pairs. Across all 10 task pairs on Split-MNIST (3 seeds, 30 measurements), $\|K_{AB}\|_F$ strongly predicts realized forgetting magnitude (Spearman $\rho = 0.88$, $p < 10^{-10}$): before training on any new task, one can cheaply estimate which prior tasks will suffer the most damage. On CIFAR-10 the correlation is weaker ($\rho = 0.36$, $p = 0.053$), consistent with stronger NTK regime violations on CNNs.

\section{Multi-Head Evaluation}
\label{app:multihead}
Task-specific heads eliminate the cross-output interference that makes shared-head catastrophic. EWC recovers from $19.8\%$ (shared) to $88.4\%$ (multi) on CIFAR, confirming that its shared-head failure arises from output-layer interference, not a fundamental flaw in parameter-space regularization. Spectral reg is competitive on MNIST but no longer leads on multi-head CIFAR --- exactly what Remark~\ref{rem:kronecker} predicts: the $C$-fold eigenvalue multiplicity does not apply per-head, so the scaling-rule advantage that drives shared-head dominance disappears. The result confirms the theory's scope rather than contradicting it.

\begin{table}[h!]
\centering
\caption{Multi-head evaluation. Task-specific heads eliminate shared-head output interference: EWC recovers on CIFAR, and LwF leads.}
\label{tab:multi_app}
\small
\setlength{\tabcolsep}{5pt}
\renewcommand{\arraystretch}{0.95}
\begin{tabular}{lcccc}
\toprule
& \multicolumn{2}{c}{MNIST Multi-Head} & \multicolumn{2}{c}{CIFAR Multi-Head} \\
\cmidrule(lr){2-3}\cmidrule(lr){4-5}
Method & Acc (\%) $\uparrow$ & Fgt (\%) $\downarrow$ & Acc (\%) $\uparrow$ & Fgt (\%) $\downarrow$ \\
\midrule
No reg              & 92.2\tiny{$\pm$2.3} & 9.4\tiny{$\pm$2.8} & 79.2\tiny{$\pm$2.2} & 16.7\tiny{$\pm$2.7} \\
SI                  & 97.3\tiny{$\pm$0.8} & 3.0\tiny{$\pm$0.9} & 83.1\tiny{$\pm$4.0} & 11.1\tiny{$\pm$4.8} \\
Spectral $k{=}10$   & 99.1\tiny{$\pm$0.2} & 0.5\tiny{$\pm$0.2} & 81.1\tiny{$\pm$1.1} & 10.4\tiny{$\pm$0.9} \\
Spectral $k{=}50$   & \textbf{99.4}\tiny{$\pm$0.1} & \textbf{0.4}\tiny{$\pm$0.1} & 86.3\tiny{$\pm$1.5} & 7.0\tiny{$\pm$1.9} \\
EWC                 & 98.4\tiny{$\pm$0.7} & 1.7\tiny{$\pm$0.9} & 88.4\tiny{$\pm$0.6} & \textbf{1.2}\tiny{$\pm$0.4} \\
LwF (50p)           & 99.3\tiny{$\pm$0.1} & 0.5\tiny{$\pm$0.1} & \textbf{89.2}\tiny{$\pm$1.2} & 4.6\tiny{$\pm$1.0} \\
\bottomrule
\end{tabular}
\end{table}
\section{CIFAR-10 Drift Decomposition}
\label{app:drift_cifar}

\begin{table}[h]
\centering
\caption{Drift decomposition after 5 tasks on Split-CIFAR-10 (shared-head, $k{=}10$).}
\small
\begin{tabular}{lccc}
\toprule
Method & Vuln.\ $\downarrow$ & Other & Ratio \\
\midrule
No reg              & 146.6 & 106.1 & 0.7:1 \\
EWC (best)          & 425.4 & 263.3 & 0.6:1 \\
Replay (200)        & 302.1 & 203.7 & 0.7:1 \\
LwF (50p)           & 139.5 & 76.6  & 0.5:1 \\
Spectral ($\mu{=}10$) & \textbf{30.8} & 52.6 & \textbf{1.7:1} \\
\bottomrule
\end{tabular}
\end{table}

Targeting is weaker on CIFAR-10 (ratio $1.7$:$1$) than MNIST ($75$:$1$), consistent with the faster subspace rotation on CNNs (Appendix~\ref{app:stability}). Spectral reg remains the only method where other drift exceeds vulnerable drift.

\section{Probe Scaling}
\label{app:probe}

\begin{table}[h]
\centering
\caption{Probe scaling on Split-MNIST. Spectral reg extracts more anti-forgetting signal per probe at $\leq$100 probes, LwF overtakes at 200.}
\small
\begin{tabular}{lcc}
\toprule
Probes & Spectral $k{=}50$ & LwF \\
\midrule
20  & \textbf{71.8}\tiny{$\pm$2.4} & 67.1 \\
50  & \textbf{84.6}\tiny{$\pm$1.2} & 80.3\tiny{$\pm$1.4} \\
100 & \textbf{87.9}\tiny{$\pm$0.7} & 87.2 \\
200 & 88.3\tiny{$\pm$0.4} & \textbf{92.4} \\
\bottomrule
\end{tabular}
\end{table}

At low probe count, KL divergence is diluted across all output dimensions each direction receives a weak supervisory signal. Eigenmode projection concentrates the penalty on $k$ directions, extracting more anti-forgetting value per stored point. At high probe count, LwF's broad coverage pays off: per-direction signal becomes strong across all dimensions. Targeted regularization wins at low memory, broad regularization wins at high memory.

\section{Sensitivity Analysis}
\label{app:sensitivity}

\begin{table}[h]
\centering
\caption{Sensitivity to $k$ (Split-MNIST, $\mu{=}1$, $C{=}10$). The plateau at $k \in [10, 50]$ matches Remark~\ref{rem:kronecker}: $k^\star \approx C \cdot k_G$ with $k_G \in [1, 5]$ predicts $k^\star \in [10, 50]$.}
\small
\setlength{\tabcolsep}{4pt}
\resizebox{\columnwidth}{!}{%
\begin{tabular}{lcccccc}
\toprule
$k$ & 1 & 5 & 10 & 20 & 50 & 100 \\
\midrule
Acc (\%) & 43.2\tiny{$\pm$7.8} & 72.7\tiny{$\pm$5.8} & 79.6\tiny{$\pm$1.2} & 82.1\tiny{$\pm$0.7} & \textbf{84.6}\tiny{$\pm$0.5} & 83.2\tiny{$\pm$0.2} \\
\bottomrule
\end{tabular}}
\end{table}

$\mu$-sensitivity: MNIST optimal $\mu{=}10$, CIFAR optimal $\mu{=}1$ (CNNs need more plasticity). The range $\mu \in [1, 10]$ is robust. L2 projection is the natural loss for eigenmode-specific regularization.

\textbf{Measured $k_G$.} We verify the Kronecker rule's input by measuring the effective rank of the (raw) single-output feature Gram $G_{ij}=\phi(x_i)^\top\phi(x_j)$ -- the object that $K_{AA}=I_C\otimes G$ depends on -- using $10$ probes/class. We report both the participation ratio $\mathrm{PR}=(\sum_i\lambda_i)^2/\sum_i\lambda_i^2$ and $k_{50}$ (modes for $50\%$ energy):
\begin{center}
\small
\begin{tabular}{lccc}
\toprule
Features & $\mathrm{PR}$ & $k_{50}$ & top-1 \\
\midrule
TinyMLP penultimate (MNIST)        & 3.1 & 1 & 0.50 \\
Frozen ResNet-18 (CIFAR-10)        & 2.1 & 1 & 0.69 \\
Frozen ResNet-18 (CIFAR-100)       & 2.4 & 1 & 0.65 \\
\bottomrule
\end{tabular}
\end{center}
All three sit in $k_G\in[1,5]$ (a single dominant direction, $k_{50}{=}1$, with $\mathrm{PR}\approx2$--$3$), confirming the input to $k^\star\approx C\cdot k_G$. For CIFAR-100 ($C{=}100$) this gives $k^\star\approx100$--$240$, consistent with the $k{=}100$ matched point at which targeted and broad function-space methods converge (Section~\ref{sec:peft_cl}).

\end{document}